\newcommand*\system{\textsc{Arden}}
\begin{document}

\settopmatter{printacmref=false}
\renewcommand\footnotetextcopyrightpermission[1]{} % removes footnote with conference information in first column
\pagestyle{plain} % removes running headers
\makeatletter
\renewcommand\@formatdoi[1]{\ignorespaces}
\makeatother

\title[Improving Performance of Private Deep Learning]{Not Just Privacy: Improving Performance of \\ Private Deep Learning in Mobile Cloud}

\author{Ji Wang}
\authornote{Both authors contribute equally to the paper.}
\affiliation{%
  \department{College of Systems Engineering}
  \institution{National University of Defense Technology}
  \city{Changsha}
  \country{P.R. China}
}
\email{wangji@nudt.edu.cn}

\author{Jianguo Zhang}
\authornotemark[1]
\affiliation{%
  \department{Department of Computer Science}
  \institution{University of Illinois at Chicago}
  \city{Chicago}
  \country{USA}
}
% \affiliation{%
%   \institution{Alibaba Group}
%   \city{Hangzhou}
%   \country{P.R. China}
% }
\email{jzhan51@uic.edu}

\author{Weidong Bao}
\affiliation{%
  \department{College of Systems Engineering}
  \institution{National University of Defense Technology}
  \city{Changsha}
  \country{P.R. China}
}
\email{wdbao@nudt.edu.cn}

\author{Xiaomin Zhu}
\affiliation{%
  \department{College of Systems Engineering}
  \department{State Key Laboratory of High Performance Computing}
  \institution{National University of Defense Technology}
  \city{Changsha}
  \country{P.R. China}
}
\email{xmzhu@nudt.edu.cn}

\author{Bokai Cao}
\affiliation{%
  \institution{Facebook Inc.}
  \city{Menlo Park}
  \country{USA}
}
\email{caobokai@fb.com}

\author{Philip S. Yu}
\affiliation{%
  \department{Department of Computer Science}
  \institution{University of Illinois at Chicago}
  \city{Chicago}
  \country{USA}
}
\affiliation{%
  \department{Institute for Data Science}
  \institution{Tsinghua University}
  \city{Beijing}
  \country{P.R. China}
}
\email{psyu@uic.edu}

% The default list of authors is too long for headers.
\renewcommand{\shortauthors}{J. Wang et al.}

\begin{abstract}
The increasing demand for on-device deep learning services calls for a highly efficient manner to deploy deep neural networks (DNNs) on mobile devices with limited capacity. The cloud-based solution is a promising approach to enabling deep learning applications on mobile devices where the large portions of a DNN are offloaded to the cloud. However, revealing data to the cloud leads to potential privacy risk. To benefit from the cloud data center without the privacy risk, we design, evaluate, and implement a cloud-based framework \system\ which partitions the DNN across mobile devices and cloud data centers. A simple data transformation is performed on the mobile device, while the resource-hungry training and the complex inference rely on the cloud data center. To protect the sensitive information, a lightweight privacy-preserving mechanism consisting of arbitrary data nullification and random noise addition is introduced, which provides strong privacy guarantee. A rigorous privacy budget analysis is given. Nonetheless, the private perturbation to the original data inevitably has a negative impact on the performance of further inference on the cloud side. To mitigate this influence, we propose a noisy training method to enhance the cloud-side network robustness to perturbed data. Through the sophisticated design, \system\ can not only preserve privacy but also improve the inference performance. To validate the proposed \system, a series of experiments based on three image datasets and a real mobile application are conducted. The experimental results demonstrate the effectiveness of \system. Finally, we implement \system\ on a demo system to verify its practicality.
\end{abstract}

%
% The code below should be generated by the tool at
% http://dl.acm.org/ccs.cfm
% Please copy and paste the code instead of the example below.
%
% \begin{CCSXML}
% <ccs2012>
% <concept>
% <concept_id>10002978.10002991.10002995</concept_id>
% <concept_desc>Security and privacy~Privacy-preserving protocols</concept_desc>
% <concept_significance>500</concept_significance>
% </concept>
% <concept>
% <concept_id>10010147.10010257</concept_id>
% <concept_desc>Computing methodologies~Machine learning</concept_desc>
% <concept_significance>500</concept_significance>
% </concept>
% <concept>
% <concept_id>10010520.10010521.10010537</concept_id>
% <concept_desc>Computer systems organization~Distributed architectures</concept_desc>
% <concept_significance>500</concept_significance>
% </concept>
% </ccs2012>
% \end{CCSXML}

% \ccsdesc[500]{Security and privacy~Privacy-preserving protocols}
% \ccsdesc[500]{Computing methodologies~Machine learning}
% \ccsdesc[500]{Computer systems organization~Distributed architectures}

\keywords{deep learning; differential privacy; mobile cloud}

\maketitle

\section{Introduction}

Recent years have seen impressive successes of deep neural networks in various domains. DNNs demonstrate superb capabilities in discovering high-dimensional structures from large volume of data. Meanwhile, mobile devices such as smartphones, medical tools, and Internet of Things (IoT) devices have become nearly ubiquitous. There is a high demand for the on-device machine learning services, including object recognition, language translation, health monitoring, and many others \cite{He2017,Cao2017,Li2017modeling,sun2017sequential}. Encouraged by the outstanding performance of DNNs in these services, people naturally attempt to push deep learning to mobile devices \cite{Lane2016}.

The unprecedented advances of deep learning are enabled, in part, by the ability of very large deep neural networks. For example, in the ImageNet Large Scale Visual Recognition Challenge (ILSVRC) from 2010 to 2015, the continuously decreasing error rates achieved by the state-of-the-art DNNs are accompanied by their increasing complexity as shown in Fig.~\ref{fig:netsize}. There are millions to hundreds of millions of parameters in these advanced DNNs which require significant processing and storage resource. However, the miniature nature of mobile devices imposes the intrinsic capacity bottleneck that makes resource-hungry applications remain off bounds. Particularly, the large DNN models exceed the limited on-chip memory of mobile devices, and thus they have to be accommodated by the off-chip memory, which consumes significantly more energy \cite{Han2015learning, Ding2017}. What is worse, the enormous dot products aggravate the burden of processing units. Running a deep learning application can easily dominate the whole system energy consumption and drain a mobile device's battery. Up to now, it is still difficult to deploy powerful deep learning applications on mobile systems \cite{Delloitte2017}.

%=====================================================
\begin{figure}[tb]
\centering
\includegraphics[width=3in]{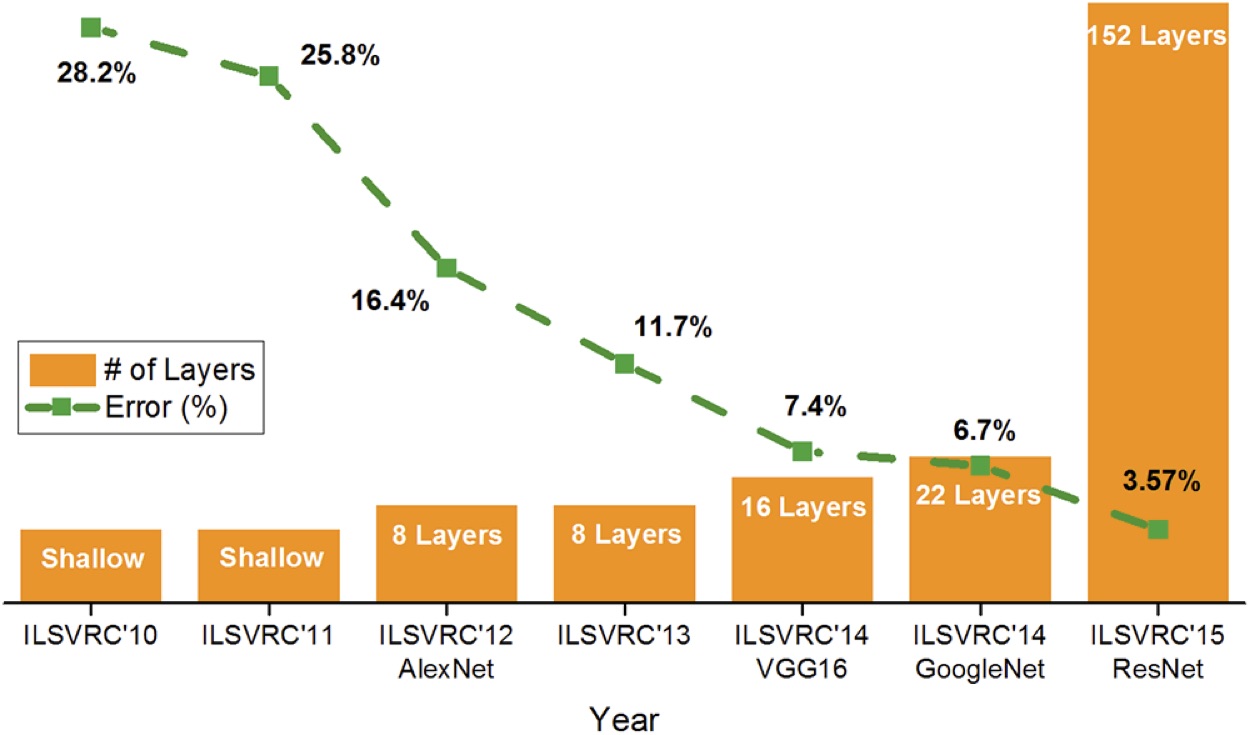}
\caption{The performance and size of the DNNs in ILSVRC'10-15 \cite{Krizhevsky2012, Simonyan2014, Szegedy2015, He2015}.} \label{fig:netsize}
\end{figure}
%======================================================

In order to enable on-device deep learning applications, academia and industry put forward two solutions: (1) compressing large DNNs, and (2) partitioning large DNNs across mobile devices and cloud data centers. The former one tries to prune and accelerate DNNs to fit deep learning applications into mobile devices. But this complicated technique suffers from performance degradation \cite{Ding2017} and uncontrolled energy consumption \cite{Yu2017}. For the latter solution, the shallow portions of a DNN are deployed on mobile devices while the complex and large parts are offloaded to the cloud data center. A fast transformation is first carried out on the input data locally. Then, the transformed data are revealed to the cloud data center for the time and energy consuming inferences. The experiment in \cite{Lane2015} shows that the total energy consumption can be reduced by approximately 10\% when a larger proportion of the DNN inferences are conducted in the cloud. Apart from the resource and energy consideration, the cloud-based solution is appealing for deep learning service providers. Because of the intellectual property, the deep learning service providers are not always willing to share their valuable and highly tuned DNNs \cite{Osia2017}. Deploying large parts of a DNN in the centralized data center provides a viable alternative to protect the service providers' intellectual property. The cloud-based solution paves a promising way to enable deep learning applications on mobile devices.

Despite the benefits brought by the cloud-based solution, it presents obvious privacy issues with transmitting data from mobile devices to the cloud data centers. Once an individual reveals its data to cloud data centers for further inferences, it is almost impossible for the individual to control the usage of the data. Hence, it is a critical requirement to lessen the privacy risk when implementing the cloud-based deep learning applications. Unfortunately, this is a challenging and tough problem for a number of reasons:
\begin{itemize}[leftmargin=*,noitemsep,topsep=0pt]
  \item \textbf{Guarantee the privacy}. Although the existing literature claimed that revealing the transformed data rather than the raw data could provide a satisfactory privacy protection \cite{Teerapittayanon2017, Li2017}, there still exist the threats of information disclosure \cite{Ledig2017}. Therefore, a privacy-preserving mechanism that provides a rigorous and strong privacy guarantee should be designed.
  \item \textbf{Minimize the overhead}. It is to relieve the burden of mobile devices that we resort to the cloud-based solutions. It contradicts the original purpose to cause too much overhead when providing privacy protection. Hence, we should minimize the overhead of the privacy-preserving mechanism to provide lightweight property.
  \item \textbf{Improve the performance}. In order to preserve privacy, the data released to the cloud data center is distorted and perturbed, which definitely would lead to a degradation of the inference performance, \textit{e.g.}, classification accuracy. An effective method is needed to alleviate degradation and yield a good inference result while still preserving privacy.
\end{itemize}

To address the above problems, we take both privacy and performance into consideration, and propose a priv\textbf{A}te infe\textbf{R}ence framework based on \textbf{D}eep n\textbf{E}ural \textbf{N}etworks in mobile cloud, named \system. \system\ partitions a deep neural network across the mobile device and the cloud data center. On the mobile device side\footnote{Notice that the terms mobile device side, local side, and end side are used interchangeably in this paper.}, the raw data is transformed by the shallow portions of the DNN to extract their lower-level features. The local neural networks are derived from the pretrained neural networks to avoid local training, and can be regarded as a feature extractor for different inference tasks based on the idea of transfer learning \cite{Yosinski2014}. In order to preserve privacy, we introduce the differential privacy mechanism \cite{Dwork2011} on the local side, which adds deliberate noise into the data before uploading. The large portions of the DNN are deployed in the cloud data center to run the complex and resource-hungry inference tasks. We propose a noisy training method to make the DNN robust to the additional noise in the data revealed by mobile devices, and so to improve the inference performance. Our main contributions are listed as follows:
\begin{itemize}[leftmargin=*,noitemsep,topsep=0pt]
  \item \textbf{A framework enabling deep learning on mobile devices}. We take the privacy, performance, and overhead into consideration, and design a framework to partition the very large deep neural networks in mobile cloud environment. All the resource-hungry tasks, \textit{e.g.}, network training and complex inference are offloaded to the cloud data centers. The works in the cloud data center are transparent to end mobile devices, which enables the online model upgrade.
  \item \textbf{A differentially private local transformation mechanism}. To preserve privacy, we propose a new mechanism to perturb the local data transformation based on the differential privacy mechanism. Compared with the existing ones, our proposed mechanism can be customized to protect specific data items, and fits well with the stacking structure of neural networks. The corresponding privacy budget is analyzed theoretically to provide rigorous privacy guarantee.
  \item \textbf{A noisy training approach for performance improvement}. We propose a noisy training approach that contains a generative model to inject delicate noisy samples into the training set to increase the robustness of cloud-side DNNs to the perturbed data. By this method, the negative effect incurred by the end-side noise on the inference tasks is largely alleviated.
  \item \textbf{Thorough empirical evaluation}. We evaluate the proposed framework on standard image classification tasks and a real mobile application. Further, \system\ is deployed on a real demo system to test its overhead.
\end{itemize}

\section{Preliminary}

In this section, we briefly introduce the basic knowledge of deep learning, including deep neural networks, stochastic gradient descent algorithm, and transfer learning. Then, we revisit the definition of differential privacy.

\subsection{Deep Learning}
Deep learning, which revolutionizes many machine learning tasks, transforms inputs to desired outputs by feed-forward neural networks comprising many layers of basic units like affine transformations and nonlinear activation functions. The essential idea of stacking multiple layers is to extract complex representations from high-dimensional inputs progressively. A typical neural network with two hidden layers is shown in Fig. \ref{fig:dnn}(a). Each node in the network is a neuron that takes a weighted sum of the outputs of the prior layer, and then expose the sum to the next layer through a nonlinear activation function.
%=====================================================
\begin{figure}[htb]
\centering
\subfloat[]{\includegraphics[width=1.5in]{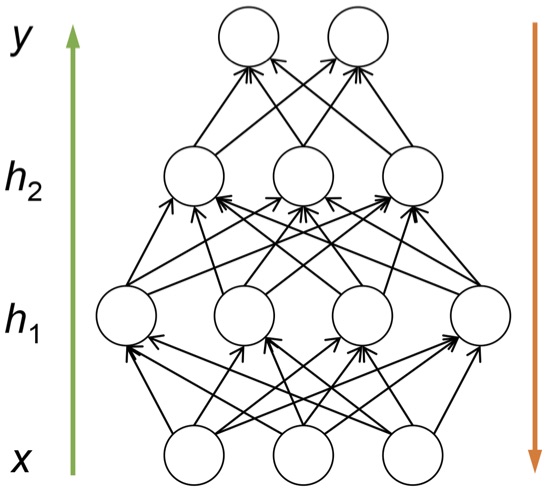}}\ \ \ \ \
\subfloat[]{\includegraphics[width=1.5in]{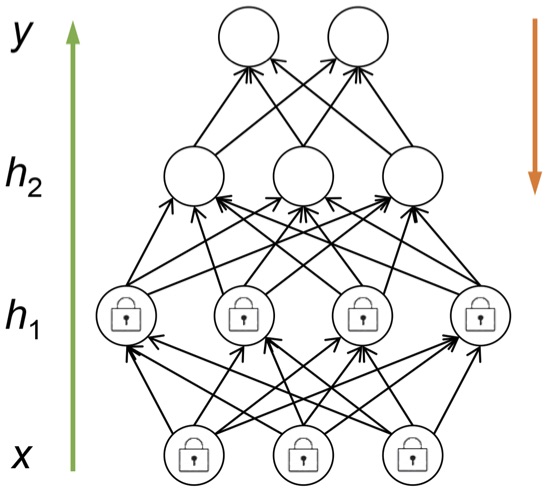}}
\caption{Illustrations of deep neural networks and transfer learning. The green and the orange arrows represent the feed-forward and the back-propagation procedures, respectively.} \label{fig:dnn}
\end{figure}
%======================================================

The main work of training a DNN is to automatically learn the parameters that minimize the loss function of the DNN from large volume of training data. It is usually done by the mini-batch stochastic gradient descent (SGD) algorithm and its variants \cite{Zhang2004, Hinton2006}. At each step, the SGD first calculates the loss function through the feed-forward procedure. Given the batch of random training samples $\boldsymbol{x}_N=\{x_1,...,x_{|N|}\}$ and the parameters $\boldsymbol{\omega}$, the feed-forward procedure computes the output of the DNN sequentially, and then calculates the loss for each training sample $J(\boldsymbol{\omega};x_i)$ that depicts the difference between the feedforward output and the ground truth. After that, the backpropagation procedure calculates the partial derivative of $ J(\boldsymbol{\omega};x_i)$ with respect to each parameter $\omega_j$ in $\boldsymbol{\omega}$, and then takes an average value over the batch, i.e., $g_j= \frac{{1}}{{|N|}}\sum\nolimits_i {{\nabla _{\omega_j} }J(\boldsymbol{\omega} ;{x_i})}$. The update rule for $\omega_j$ is:
\begin{equation}
{\omega _j} = {\omega _j} - \alpha {g_j},
\end{equation}
where $\alpha$ is the learning rate. One full iteration over all training samples is referred as an epoch.

As the stacking layers represent input data at a progressively higher level of abstraction, the shallow-layer representations appear to be general to diverse datasets or tasks, rather than specific to a particular one \cite{Yosinski2014}. It is plausible to regard the shallow layers of a pretrained DNN on one dataset as a generic feature extractor that can be applied to other target tasks or datasets. This is the key idea of transfer learning in deep neural networks. More precisely, the usual transfer learning approach \cite{Yosinski2014, Oquab2014} is to copy the first $n$ layers of a pretrained DNN to the first $n$ layers of a target DNN. Then, the remaining layers of the target DNN are retrained towards the task or dataset of interest while the transferred shallow layers are left frozen. One illustration of the transfer learning approach is shown in Fig. \ref{fig:dnn}(b).

\subsection{Differential Privacy}
Differential privacy is a concept of privacy tailored to the privacy-preserving data analysis. It aims at providing provable privacy guarantee for sensitive data and is increasingly regarded as a standard notion for rigorous privacy \cite{Beimel2014}. Formally, the definition of $\varepsilon$-differential privacy is given as below:

\begin{definition}~\cite{Dwork2011diff}
  A randomized mechanism $\mathcal{A}$ is $\varepsilon$-differentially private, iff for any adjacent input $d$ and $d'$, and any output $S$ of $\mathcal{A}$,
  \begin{equation}
    \Pr [\mathcal{A} (d) = S] \le {e^\varepsilon } \cdot \Pr [\mathcal{A} (d') = S].
  \end{equation}
\end{definition}

Typically, the inputs $d$ and $d'$ are adjacent inputs when they differ by only one data item. The adjacent input is an application-specific notion. For example, a sentence is divided into several items for every 5 words. Two sentences are considered to be adjacent sentences if they differ by at most 5 consecutive words. The parameter $\varepsilon$ is the privacy budget \cite{Dwork2011diff}, which controls the privacy guarantee of the randomized mechanism $\mathcal{A}$. A smaller value of $\varepsilon$ indicates a stronger privacy guarantee. According to this definition, a differentially private algorithm can provide aggregate representations about a set of data items without leaking information of any data item.

A general method for approximating a deterministic function $f$ with $\varepsilon$-differential privacy is to add noise calibrated to the global sensitivity of $f$, denoted as $\Delta f$, which is the maximal value of ${\left\| {f(d) - f(d')} \right\|}$ among any pair of $d$ and $d'$. For instance, the Laplacian mechanism is defined by,
\begin{equation}
\mathcal{A}_f (d) = f(d)+Lap(\frac{{\Delta f}}{\varepsilon }),
\end{equation}
where $Lap(\frac{{\Delta f}}{\varepsilon })$ is a random variable sampled from the Laplace distribution with scale $\frac{{\Delta f}}{\varepsilon}$.

The differential privacy enjoys a superb property, the immunity to post-processing \cite{Dwork2013}. Any algorithm cannot compromise the differentially private output and make it less differentially private. That is, if a differential privacy mechanism protects the privacy of sensitive data item, then the privacy loss cannot be increased even by the most sophisticated algorithm.

\section{The Proposed Framework}
%=====================================================
\begin{figure}[tb]
\centering
\includegraphics[width=3.3in]{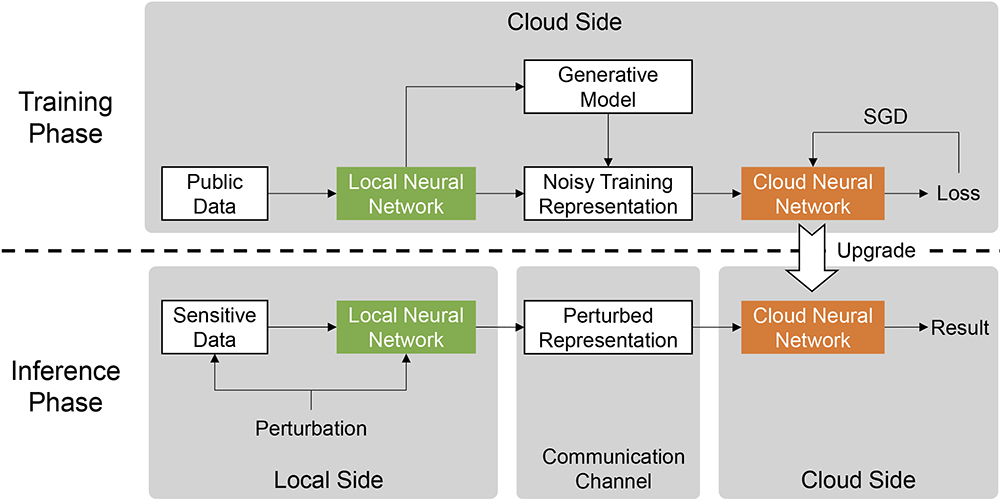}
\caption{The overview of \system.} \label{fig:framework}
\end{figure}
%======================================================

This section describes the proposed framework toward DNN-based private inference in mobile cloud. We first give the overview of \system. Then, the two key techniques in \system\ are detailed.

\subsection{Overview}
The overview of \system\ is presented in Fig. \ref{fig:framework}. \system\ relies on the mobile cloud environment and divides the DNN into the local-side part and the cloud-side part. The local neural network is derived from the pretrained DNN whose structure and weights are frozen. The cloud neural network is fine-tuned in the training phase. The whole training phase and the complex inference phase are performed in the cloud data centers. Mobile devices merely undertake the simple and lightweight feature extraction and perturbation.

In the inference phase, the sensitive data is transformed by the local neural network to extract the general features embedded in it. For preserving privacy, the transformation is perturbed by both nullification and random noise which are consistent with differential privacy. Then, the perturbed representations are transmitted to the cloud for further complex inference. Because the data to be transmitted are the abstract representation of the raw data, the size of the data to be transmitted is smaller than that of the raw data. The local initial transformation can reduce communication cost compared with transmitting raw data directly.

In the training phase, we use the public data of the same type as the sensitive data to train the cloud neural network. In order to improve the robustness of the cloud neural network to the noise, we propose a noisy training method where both raw training data and generative training data are fed into the network to tune the weights. The key component in noisy training is the generative model that generates sophisticated noisy training samples based on the public data. In addition, it is worth noting that the training phase and the inference phase can run in parallel once we get an initial cloud neural network. Benefitting from the transfer learning, we only need to train the cloud neural network for different datasets and tasks while keeping the local neural network frozen. It indicates that all the works on the cloud side are transparent to end mobile devices. The cloud neural network can be upgraded online without the service interruption to end users. This transparency property facilitates the deep learning service on one hand and on the other hand protects the intellectual property of service providers.

\subsection{Differentially Private Transformation}
To preserve privacy, the data transformation on the local side is perturbed. One of the key techniques in \system\ is how to inject the perturbation that satisfies the differential privacy and measure the privacy budget of the perturbation.

Regarding the local neural network as a deterministic function $\boldsymbol{x}_r = \mathcal{M}(\boldsymbol{x}_s)$, where $\boldsymbol{x}_s$ represents sensitive input data, one intuitive attempt of providing $\varepsilon$-differential privacy is to add noise that conforms to the Laplace distribution with scale $\Delta {\mathcal{M}}{\rm{/}}\varepsilon$ into the output $\boldsymbol{x}_r$. However, it is difficult to estimate the global sensitivity. An overly conservative estimation would add too much noise into the output representation, which destroys the utility of the representation for future inferences. Besides, this straightforward approach cannot be customized to provide personalized private requirements such as masking particular data items that are highly sensitive. Therefore, we propose a more sophisticated mechanism including nullification and layer-wise perturbation. The corresponding privacy budget analysis is given in detail.

Algorithm 1 outlines the differentially private data transformation on the local side. For each sensitive data $\boldsymbol{x}_s$, some data items are masked by the nullification operation. Then the data is fed into the local neural network for feature extraction. At a specific layer $l$, for the output of $\mathcal{M}_l$ (the neural network from the first layer to the $l$-th layer), we bound its infinity norm by $B$, and inject noise to protect privacy. At last, the perturbed final representation is generated by $\overline{\mathcal{M}}_l$ (the neural network after the $l$-th layer), and is transmitted to the cloud side. Next we discuss each operation in Algorithm 1 in detail and analyze the privacy budget.
%=====================================================
\begin{algorithm}[tb]
\SetAlgoVlined
\small
\KwIn{Each sensitive data $\boldsymbol{x}_s$; Local network $\mathcal{M}(\cdot)=\overline{\mathcal{M}}_l(\mathcal{M}_l(\cdot))$.}
\Parameter{Nullification matrix $I_n$; Noise scale $\sigma$; Bound threshold $B$; Injection layer $l$.}
$\boldsymbol{x}'_s\leftarrow \boldsymbol{x}_s\odot I_n$\;
$\boldsymbol{x}_l\leftarrow \mathcal{M}_l(\boldsymbol{x}'_s)$\;
$\boldsymbol{x}'_l\leftarrow \boldsymbol{x}_l/\max(1,\frac{{{{\left\| {{\boldsymbol{x}_l}} \right\|}_{\rm{\infty}}}}}{B})$\;
$\tilde{\boldsymbol{x}}'_l\leftarrow \boldsymbol{x}'_l + Lap(B/\sigma\mathbf{I})$\;
$\tilde{\boldsymbol{x}}_r\leftarrow \overline{\mathcal{M}}_l(\tilde{\boldsymbol{x}}'_l)$\;
\KwOut{Perturbed representation $\tilde{\boldsymbol{x}}_r$.}
\small\caption{Differentially Private Transformation}
\end{algorithm}
%======================================================

Nullification: Given the input sensitive data $\boldsymbol{x}_s$ that consists of $N$ data items, nullification performs item-wise multiplication of $\boldsymbol{x}_s$ with $I_n$, where $I_n$ is a binary matrix constituted of 0 and 1 with the same dimensions as $\boldsymbol{x}_s$. $I_n$ can be either specified by end users to nullify the highly sensitive data items or generated randomly. The number of zeros in $I_n$ is determined by $\left\lceil {N \cdot \mu } \right\rceil$, where $\left\lceil  \cdot  \right\rceil$ is the ceiling function, and $\mu$ is the nullification rate. The zeros are located in $I_n$ conforming to the uniform distribution. Apparently, a high value of $\mu$ has a negative impact on the inference performance, which will be examined in Section 4.

Norm bounding: It is difficult to estimate the global sensitivity of neural networks. Hence, for each sensitive data $\boldsymbol{x}_s$, we clip the max value of the output of the injection layer within fixed bounds to estimate sensitivity, \textit{i.e.}, the output $\boldsymbol{x}_l$ is bounded as $\boldsymbol{x}_l/\max(1,\frac{{{{\left\| {{\boldsymbol{x}_l}} \right\|}_{\rm{\infty}}}}}{B})$. It indicates that $\boldsymbol{x}_l$ is preserved when ${\left\| {{\boldsymbol{x}_l}} \right\|}_{\rm{\infty}}\leq B$, whereas it is scaled down to $B$ when ${\left\| {{\boldsymbol{x}_l}} \right\|}_{\rm{\infty}}> B$. Then, the global sensitivity can be estimated as 2$B$. The bound threshold $B$ is input-independent and so would leak no sensitive information.  In practice, the value of $B$ can be set as the median of the infinity norm of the original outputs over the training \cite{Abadi2016}.

We add random noise sampled from the Laplace distribution into the bounded output $\boldsymbol{x}'_l$ to protect the privacy. Different from the existing works where the noise is added into the final output of the deterministic function, the noise is added during the transformation in Algorithm 1. This injection method is more flexible, and fits much better with the stacking structure of neural networks. One important issue of differentially private transformation is determining the privacy budget that represents a rigorous privacy guarantee provided by the designed mechanism. It is given as follows.
\begin{restatable}{theorem}{fta}
\label{thm:privacy}
Given the sensitive data $\boldsymbol{x}_s$ and the local neural network $\mathcal{M}$, Algorithm 1 is $\varepsilon$-differentially private,
\begin{equation}
\varepsilon  = \ln [(1-\mu) {e^{2\sigma /\Lambda}} + \mu],
\end{equation}
where $\Lambda = \|{\nabla _{{{\boldsymbol{x}}'_l}}}{\overline {\mathcal M} _l}\|_{\infty}$.
\end{restatable}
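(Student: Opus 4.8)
The plan is to bound, for any two adjacent sensitive inputs $\boldsymbol{x}_s,\boldsymbol{x}_s'$ that differ in a single data item (say at coordinate block $p$) and any output event $S$, the likelihood ratio $\Pr[\mathcal{A}(\boldsymbol{x}_s)=S]/\Pr[\mathcal{A}(\boldsymbol{x}_s')=S]$ by $(1-\mu)e^{2\sigma/\Lambda}+\mu$, and then to take logarithms. Algorithm~1 draws two independent sources of randomness: the nullification mask $I_n$ and the Laplace vector $Lap(B/\sigma\,\mathbf{I})$. I would condition on $I_n$ first and split the analysis according to whether coordinate $p$ is zeroed by $I_n$.

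\textbf{Step 1 (the nullification split).} The mask zeroes $\lceil N\mu\rceil$ of the $N$ items uniformly at random, so coordinate $p$ is zeroed with probability $\lceil N\mu\rceil/N\ge\mu$; I will use $\mu$ as a conservative value. On the event that $p$ is zeroed, $\boldsymbol{x}_s\odot I_n=\boldsymbol{x}_s'\odot I_n$, so the conditional laws of $\mathcal{A}(\boldsymbol{x}_s)$ and $\mathcal{A}(\boldsymbol{x}_s')$ coincide, and this branch contributes the factor $1$ with weight $\mu$. On the complementary event the masked inputs still differ only at $p$, and the remaining operations (clipping in line~3, Laplace noise in line~4, and the deterministic post-network $\overline{\mathcal{M}}_l$ in line~5) must carry the guarantee; this branch has weight $1-\mu$.

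\textbf{Step 2 (the noisy core, producing $e^{2\sigma/\Lambda}$).} On the surviving branch, feeding the two masked inputs through $\mathcal{M}_l$ and the clipping of line~3 yields two layer-$l$ representations of infinity norm $\le B$, hence differing by at most $2B$ — the ``global sensitivity $2B$'' invoked in the body. The released object is $\tilde{\boldsymbol{x}}_r=\overline{\mathcal{M}}_l(\boldsymbol{x}_l'+\boldsymbol{n})$ with $\boldsymbol{n}\sim Lap(B/\sigma\,\mathbf{I})$, where $\boldsymbol{x}_l'$ is the clipped representation for $\boldsymbol{x}_s$. I would express the density of $\tilde{\boldsymbol{x}}_r$ by a change of variables through $\overline{\mathcal{M}}_l$: its Jacobian factor does not depend on the input and so cancels in the ratio, leaving a ratio of Laplace densities at the common pre-image. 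Bounding this by the triangle inequality, and converting the admissible displacement at the output of $\overline{\mathcal{M}}_l$ back into a displacement of at most $2B$ at layer $l$ via the infinity-norm gradient $\Lambda=\|\nabla_{\boldsymbol{x}_l'}\overline{\mathcal{M}}_l\|_\infty$, gives $\Pr[\mathcal{A}(\boldsymbol{x}_s)=S\mid p\text{ survives}]\le e^{2\sigma/\Lambda}\,\Pr[\mathcal{A}(\boldsymbol{x}_s')=S\mid p\text{ survives}]$. (Alternatively one can quote the post-processing immunity of differential privacy to reduce to the pure Laplace mechanism at layer $l$, folding $\Lambda$ into the calibration of the scale $B/\sigma$ against the sensitivity.)

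\textbf{Step 3 (recombination) and the hard part.} Averaging the two branches over $I_n$ yields $\Pr[\mathcal{A}(\boldsymbol{x}_s)=S]\le\mu\,Q_0+(1-\mu)e^{2\sigma/\Lambda}Q_1$ with $Q_0,Q_1$ the two conditional probabilities under $\boldsymbol{x}_s'$, while $\Pr[\mathcal{A}(\boldsymbol{x}_s')=S]=\mu Q_0+(1-\mu)Q_1$; combining these gives $\Pr[\mathcal{A}(\boldsymbol{x}_s)=S]\le[(1-\mu)e^{2\sigma/\Lambda}+\mu]\Pr[\mathcal{A}(\boldsymbol{x}_s')=S]$, and a logarithm finishes the proof. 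I expect the main obstacle to be Step~2's bookkeeping — arranging the norm conversions so that exactly the factor $\Lambda$ (rather than $\Lambda$ times extra dimension-dependent constants) appears when the clip bound $B$ and the Laplace scale $B/\sigma$ are pushed through the nonlinear post-layers, which also needs $\overline{\mathcal{M}}_l$ to be regular enough for the change of variables. A secondary subtlety is the recombination in Step~3: for $\boldsymbol{x}_s'$ the conditional laws on the ``zeroed'' and ``surviving'' branches are not identical, so the clean mixture bound $(1-\mu)e^{2\sigma/\Lambda}+\mu$ requires either the relation $Q_0\ge Q_1$ or treating the mask as drawn coordinatewise independently with rate $\mu$.
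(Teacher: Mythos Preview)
Your overall decomposition matches the paper's exactly: it too factors the argument into (i) a nullification/subsampling lemma (the paper's Theorem~A.2, producing the $\ln[(1-\mu)e^{\varepsilon_0}+\mu]$ form from a base guarantee $\varepsilon_0$) and (ii) an analysis of the noisy core that must yield $\varepsilon_0=2\sigma/\Lambda$. Where you diverge is in Step~2. The paper does \emph{not} do a change of variables through $\overline{\mathcal{M}}_l$; instead it linearizes by a first-order Taylor expansion, $\overline{\mathcal{M}}_l(\boldsymbol{x}_l'+Lap(B/\sigma\,\mathbf{I}))\approx \mathcal{M}(\boldsymbol{x}_s)+(\nabla_{\boldsymbol{x}_l'}\overline{\mathcal{M}}_l)^{\mathrm T}Lap(B/\sigma\,\mathbf{I})$, and then invokes a scalar lemma (its Theorem~A.1) stating that $f(x)+a\cdot Lap(B/\sigma)$ is $(2\sigma/a)$-DP whenever $|f|\le B$, reading off $a=\Lambda$. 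Thus in the paper $\Lambda$ enters as the \emph{amplification} of the injected noise by the post-network, not via any Jacobian cancellation. Your change-of-variables route would need $\overline{\mathcal{M}}_l$ to be a diffeomorphism (rarely true for a neural net), while the paper's route openly rests on an approximation rather than an inequality; neither is fully rigorous, but they are genuinely different mechanisms. Your parenthetical alternative---post-processing immunity---cannot produce the claimed bound: post-processing only preserves the layer-$l$ guarantee $2\sigma$ and cannot manufacture the division by $\Lambda$, so that shortcut should be dropped.

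On Step~3 you are in fact more careful than the paper. The paper's proof of Theorem~A.2 writes $\Pr[\cdot]=\mu\Pr[\cdot]+(1-\mu)\Pr[\cdot]$ and bounds the two summands separately, silently treating them as conditional probabilities and passing directly to $((1-\mu)e^{\varepsilon}+\mu)\Pr[\cdot]$ without addressing the point you flag (that for a mask with a \emph{fixed} number of zeros the conditional laws $Q_0$ and $Q_1$ under $\boldsymbol{x}_s'$ need not coincide). The subtlety you identify is real; the paper simply elides it.
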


See Appendix A for the proof.

We can find that the Laplacian mechanism is a special case of Theorem 3.1. If we do not perform the nullification, and directly add noise into the final output of the local neural network, then $\mu=0$, $\overline{\mathcal{M}}_l(\boldsymbol{x})=\boldsymbol{x}$, and so ${\nabla _{{{\boldsymbol{x}}'_l}}}{\overline {\mathcal M} _l}=\mathbf{1}$. As a result, $\varepsilon = 2\sigma$, which agrees with the Laplacian mechanism.

\subsection{Noisy Training}
The differentially private transformation on the mobile side provides a guaranteed privacy for sensitive data, but perturbing the local transformation incurs the performance sacrifice in the cloud-side inferences nonetheless. The cloud-side DNN trained by the conventional training method where the loss of the DNN is minimized on the purely clean training data lacks the robustness to the noisy representations revealed by mobile devices. In order to mitigate the negative effectiveness caused by the perturbation, we propose the noisy training method, another key technique in \system, to enhance the robustness of cloud-side DNNs.

One explanation of the DNN's lack of robustness is that noisy representations are the blind points of the DNN trained by the conventional method. The robustness of cloud-side DNN could be enhanced by training on a mixture of noisy and clean representations. The new training loss is defined as:
\begin{equation}\label{eqn:loss-1}
\begin{split}
 J(\boldsymbol{\omega} ;{\boldsymbol{x}_r},{\boldsymbol{\widetilde x}_r}) & = \lambda J(\boldsymbol{\omega} ;{ \boldsymbol{x}_r}) + (1 - \lambda )J(\boldsymbol{\omega} ;{ \boldsymbol{\widetilde x}_r}), \\
 {\boldsymbol{\widetilde x}_r} & = {\boldsymbol{x}_r} + Lap(B/\sigma \mathbf{I}).
\end{split}
\end{equation}

Instead of calculating training loss merely on the clean representations, (\ref{eqn:loss-1}) combines the losses on both the clean and the noisy representations, and uses $\lambda$ to control the tradeoff between the two losses. Then, the robustness of the cloud-side DNN to noisy representations could be improved.

However, the way in which the random noise is added during the inference is unknown to anyone except for the mobile device itself. In addition, due to the nature of randomness, it is obviously impractical to train the cloud-side DNN on all possible noisy representations. A deviation from the trained noisy representations could still be the blind point of the cloud-side DNN. To overcome this problem and further enhance the robustness, we train the cloud-side DNN under the worst situation, which can be viewed as a min-max problem. We inject the worst perturbation $\boldsymbol{r}$ into the generated noisy representations to maximize the DNN's deviation from the original output, \textit{i.e.}, the maximal loss $J(\boldsymbol{\omega};\boldsymbol{\widetilde x}_r+\boldsymbol{r})$, while the DNN tries to minimize the deviation through training:
\begin{equation}\label{eqn:max-min}
\min \mathop {\max }\limits_{\boldsymbol{r};||\boldsymbol{r}|{|_2} \le \eta } J(\boldsymbol{\omega} ;{ \boldsymbol{\widetilde x}_r} + \boldsymbol{r}),
\end{equation}
where $\boldsymbol{r}$ is the perturbation added to the trained noisy representations, and $\eta$ controls the scale of the noise.

During the noisy training, we determine the worst perturbation that maximizes the deviation, and train the cloud-side DNN to be robust to such perturbation through (\ref{eqn:max-min}). Generally, it is difficult to obtain a closed form for the exact $\boldsymbol{r}$ that maximizes $J(\boldsymbol{\omega} ;{ \boldsymbol{\widetilde x}_r} + \boldsymbol{r})$, especially for complex models such as the DNN. To tackle this problem, we use the first-order Taylor series to approximate $J(\boldsymbol{\omega} ;{ \boldsymbol{\widetilde x}_r} + \boldsymbol{r})$:
\begin{equation}
 J(\boldsymbol{\omega} ;{ \boldsymbol{\widetilde x}_r} + \boldsymbol{r}) \approx J(\boldsymbol \omega ;{\boldsymbol{\widetilde x}_r}) + {\nabla _{{\boldsymbol{\widetilde x}_r}}}J(\boldsymbol \omega ;{\boldsymbol{\widetilde x}_r})\boldsymbol{r}.
\end{equation}

The perturbation to the noisy representation causes the deviation to grow by ${\nabla _{{\boldsymbol{\widetilde x}_r}}}J(\boldsymbol \omega ;{\boldsymbol{\widetilde x}_r})\boldsymbol{r}$. We can maximize this increase subject to the L2-norm constraint on $\boldsymbol{r}$ by assigning the perturbation in the gradient direction of $J(\boldsymbol \omega ;{\boldsymbol{\widetilde x}_r})$ with respect to $\boldsymbol{\widetilde x}_r$:
\begin{equation}\label{eqn:noise}
\begin{split}
& \boldsymbol{r} = \eta \frac{\boldsymbol{g}}{{||\boldsymbol{g}|{|_2}}},\\
& \boldsymbol{g} = {\nabla _{{\boldsymbol{\widetilde x}_r}}}J(\boldsymbol \omega ;{\boldsymbol{\widetilde x}_r}).
\end{split}
\end{equation}

%=====================================================
\begin{algorithm}[tb]
\SetAlgoVlined
\small
\KwIn{Clean representation $\boldsymbol{x}_r$; Cloud-side DNN $\mathcal{C}(\boldsymbol \omega)$.}
\Parameter{Batch size $N$; Noise scale $\sigma$; Bound threshold $B$; Controllers $\lambda$ and $\eta$; Learning rate $\alpha$.}
$\boldsymbol{d}\leftarrow\mathbf{0}$\;
\ForEach{$\boldsymbol{x}^{(i)}_r$ in $\{\boldsymbol{x}^{(1)}_r,...,\boldsymbol{x}^{(N)}_r\}$}{
    $\mathcal{L}_1 \leftarrow$ Loss($\boldsymbol{y}^{(i)}_{true}; \mathcal{C}(\boldsymbol \omega; \boldsymbol{x}^{(i)}_r)$)\;
    $\boldsymbol{\widetilde x}^{(i)}_r\leftarrow {\boldsymbol{x}_r} + Lap(B/\sigma \mathbf{I})$\;
    $\mathcal{L}_2 \leftarrow$ Loss($\boldsymbol{y}^{(i)}_{true}; \mathcal{C}(\boldsymbol \omega; \boldsymbol{\widetilde x}^{(i)}_r$)\;
    $\boldsymbol g \leftarrow \nabla _{{\boldsymbol{\widetilde x}^{(i)}_r}}\mathcal{L}_2$\;
    $\boldsymbol{r} \leftarrow \eta \frac{\boldsymbol g}{\|\boldsymbol g\|_2}$ \;
    $\mathcal{L}_3 \leftarrow$ Loss($\boldsymbol{y}^{(i)}_{true}; \mathcal{C}(\boldsymbol \omega; \boldsymbol{\widetilde x}^{(i)}_r+\boldsymbol{r}$)\;
    $\mathcal{L} \leftarrow \lambda \mathcal{L}_1 + (1-\lambda)(\mathcal{L}_2 + \mathcal{L}_3)$\;
    $\boldsymbol{d}\leftarrow \boldsymbol{d} + {{\nabla _{\boldsymbol{\omega}} }\mathcal{L}}$\;
}
$\boldsymbol{\omega}\leftarrow \boldsymbol{\omega} - \alpha \frac{\boldsymbol{d}}{N}$\;
\small\caption{Noisy Training in Each Batch}
\end{algorithm}
%======================================================

The gradient $\boldsymbol{g}$ can be efficiently computed by the backpropagation \cite{Rumelhart1986}. Then, the training loss is defined as:
\begin{equation}\label{eqn:loss-2}
J(\boldsymbol{\omega} ;{\boldsymbol{x}_r},{\boldsymbol{\widetilde x}_r}) = \lambda J(\boldsymbol{\omega} ;{ \boldsymbol{x}_r}) + (1 -\lambda )[J(\boldsymbol{\omega} ;{ \boldsymbol{\widetilde x}_r})+ J(\boldsymbol{\omega} ;{ \boldsymbol{\widetilde x}_r}+\boldsymbol{r})].
\end{equation}

Algorithm 2 outlines the noisy training framework for the cloud-side DNN. It is designed based on the stochastic gradient descent algorithm. Different from the conventional SGD, the noisy training jointly minimizes the loss on clean representations $\mathcal{L}_1$, the loss on noisy representations $\mathcal{L}_2$, and the loss on perturbed noisy representations $\mathcal{L}_3$. In each batch, Algorithm 2 firstly computes the loss on clean representations. Then, the noisy representations are generated, and are fed into the cloud-side DNN to get the loss on noisy representations. In order to augment the robustness to the random noisy representations, Algorithm 2 computes the gradient of $\mathcal{L}_2$ with respect to the noisy representations and generates the worst perturbation to the noisy representations. Finally, the backpropagation calculates the partial derivative of the joint loss with respect to each parameter in $\boldsymbol{\omega}$. The parameters $\boldsymbol{\omega}$ are updated by an average value over the batch.

\section{Experimental Evaluation}
In this section, image classification tasks and a real mobile application are used as experimental examples to evaluate the effectiveness of \system. We first examine the effect of different parameters based on two image benchmark datasets, MNIST \cite{LeCun1998} and SVHN \cite{Netzer2011}, and then verify the performance improvement based on CIFAR-10 \cite{Krizhevsky2009} and the preceding two datasets. In addition, in order to verify the effectiveness in a real scenario, we use a mobile application\footnote{\url{http://www.biaffect.com}} DeepMood \cite{Cao2017} to test \system's performance and analyze the performance under different privacy budgets. DeepMood harnesses the sequential information collected from the basic keystroke patterns and the accelerometer on the phone (\textit{e.g.}, alphanumeric character typing pattern, special character typing pattern, and accelerometer values) to predict the user's mood.

For image classification tasks, three widely used convolutional deep neural networks (Conv-Small, Conv-Middle, and Conv-Large) are implemented in \system\ \cite{Laine2017, Park2017}. We derive the local neural network from Conv-Small, which is pretrained on CIFAR-100 dataset \cite{Krizhevsky2009}. The first 3 layers of the pretrained Conv-Small are deployed as the local neural network. For MNIST and SVHN, we use Conv-Middle as the cloud-side DNN. In the performance comparison, Conv-Large is used for CIFAR-10. For different datasets and cloud-side DNNs, the local neural network remains unchanged to show the transfer learning ability and the transparency property.

For DeepMood, the single-view DNN proposed in \cite{Cao2017} is used as the cloud-side DNN. We add an extra dense layer as the local network which is pretrained on special character typing pattern. Alphanumeric character typing pattern (ALPH.) and accelerometer values (ACCEL.) are used to test \system's performance.

The proposed models are implemented using TensorFlow \cite{Tensorflow}. For image classification tasks, the learning rate and batch size are set as 0.0015 and 128, respectively. The numbers of epochs for MNIST, SVHN, and CIFAR-10 are 35, 45, and 70, respectively. For DeepMood, the learning rate, the batch size, and the epochs are 0.001, 256, and 100, respectively. Considering the randomness during perturbation, we run the test experiments ten times independently to obtain an averaged value. The code is provided for reproducibility.

We suppose that the deep learning service provider and the mobile device users reach a consensus over the perturbation strength in advance, which is a common practice in reality. The perturbation strength is represented by $(b, \mu)$, where $b$ is the diversity of the Laplace distribution, and $\mu$ is the nullification rate. In this section, the perturbation strengths for MNIST, SVHN, CIFAR-10, and DeepMood (including ALPH. and ACCEL.) are set as (5, 10\%), (2, 5\%), (2, 5\%), and (0.5, 10\%), respectively. When the perturbation is injected to the last layer of local network, based on Theorem~\ref{thm:privacy}, the privacy budgets for MNIST, SVHN, CIFAR-10, ALPH., and ACCEL. are 0.7, 3.7, 3.5, 7.8, and 9.8, respectively.

\subsection{Parameter Selection}
The parameters $\lambda$ and $\eta$ are two important parameters in the proposed noisy training. Fig.~\ref{fig:parameter} shows the change of the performance with varying $\lambda$ and $\eta$. Due to space limitation, here we focus on MNIST and SVHN. It can be found that when $\eta$ is too small, it has limited ability to resist the randomness of perturbation. However, when $\eta$ is too large, the loss $\mathcal{L}_3$ would overwhelm the other losses, which makes the DNN focus more on resisting randomness rather than classifying samples correctly. Because a small value of $\lambda$ means a large weight of $\mathcal{L}_3$ in the joint loss, small $\lambda$ usually aggravates this phenomenon. Based on the results in Fig.~\ref{fig:parameter}, $\eta$ is set as 5 for the following experiments. The effect of $\lambda$ will be further discussed.
%=====================================================
\begin{figure}[tb]
\centering
\subfloat[MNIST (5,10\%)]{\includegraphics[width=1.625in]{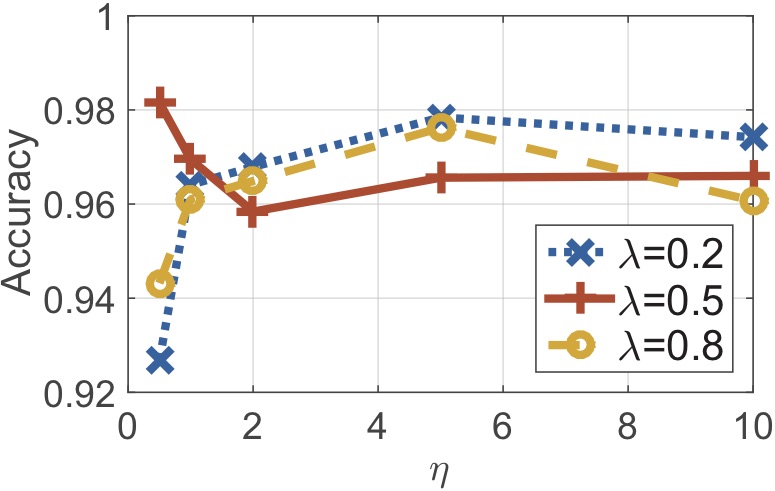}}\ \ \
\subfloat[SVHN (2,5\%)]{\includegraphics[width=1.625in]{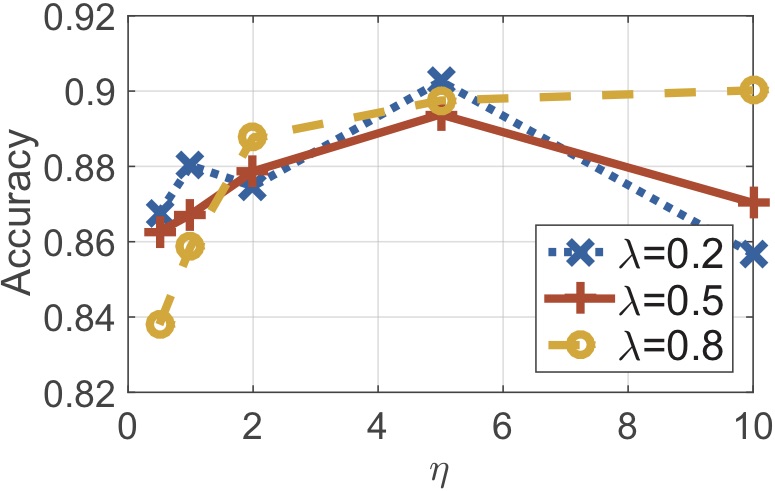}}
\caption{Effect of $\eta$ and $\lambda$.} \label{fig:parameter}
\end{figure}
%======================================================

\subsection{Impact of Private Perturbation}
Although it is assumed that there is a prior consensus between the mobile device users and the deep learning service provider, mobile device users may change their perturbation strength. Hence, we investigate the impact of different perturbation strength on the performance when the model has been trained with a pre-assigned perturbation strength. It can be seen from Fig.~\ref{fig:Perturb} that the accuracy usually reaches the peak around the pre-assigned perturbation strength. An interesting observation is that when the perturbation strength is 0, \textit{i.e.}, $b=0, \mu=0$, the accuracy is unexpectedly low, especially for $\lambda=0.2, 0.5$. This is because a small value of $\lambda$ makes the DNN biased to the noisy samples, which weakens the DNN's ability to classify the purely clean samples. When the perturbation strength is too large, it exceeds the resistance ability given by the noisy training, so the accuracy decreases. Owing to the noisy training, the performance changes are within 10\% in most cases.
%=====================================================
\begin{figure}[tb]
\centering
\subfloat[Effect of $b$ on MNIST]{\includegraphics[width=1.53in]{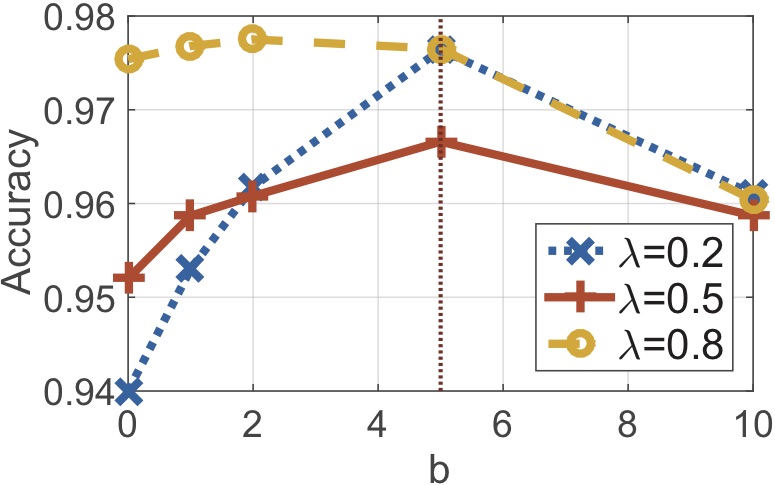}}\ \ \
\subfloat[Effect of $\mu$ on MNIST]{\includegraphics[width=1.53in]{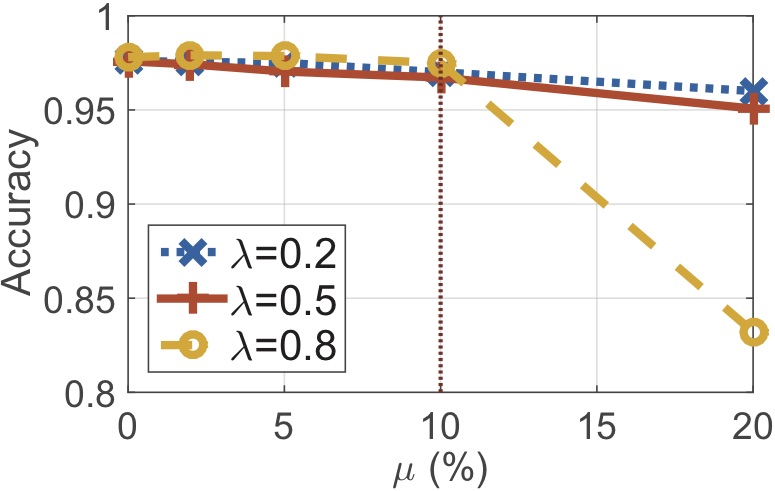}} \\
\subfloat[Effect of $b$ on SVHN]{\includegraphics[width=1.53in]{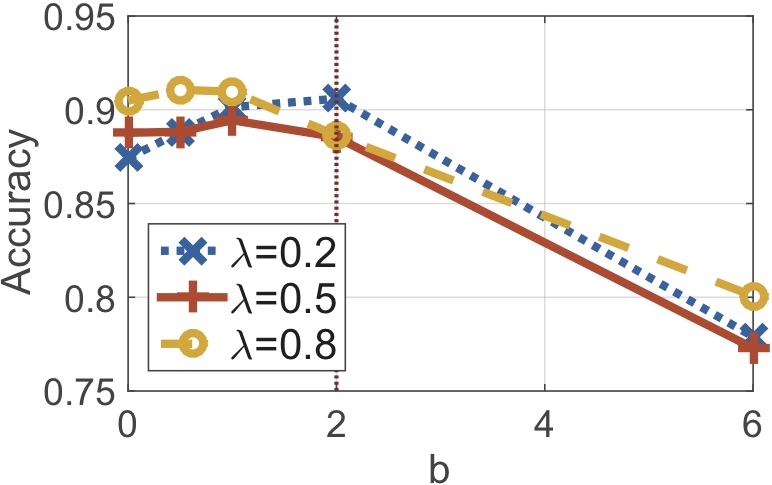}}\ \ \
\subfloat[Effect of $\mu$ on SVHN]{\includegraphics[width=1.53in]{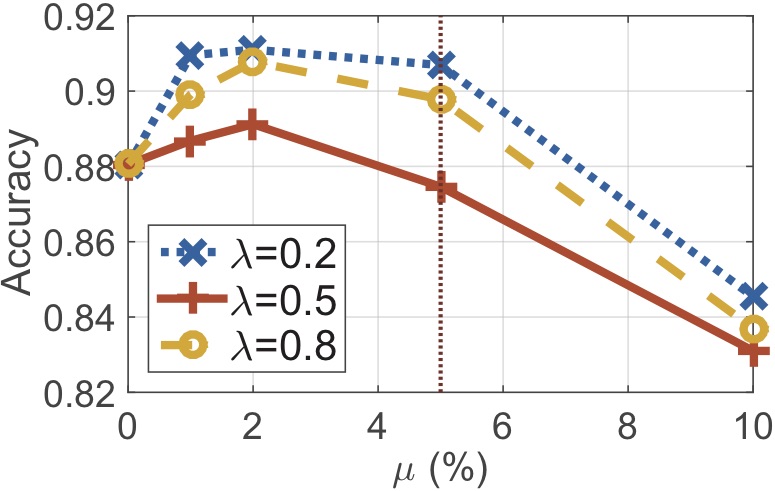}}
\caption{Effect of $b$ and $\mu$ when the model has been trained with the pre-assigned perturbation which is represented by the brown dash line.} \label{fig:Perturb}
\end{figure}
%======================================================

As the proposed perturbation method in Section 3 enables the layer-wise perturbation, we further study the impact of injecting perturbation into the different layers of the local neural network. The results in Table~\ref{table:layer} show that the DNN trained by the proposed noisy training method is robust to the layer-wise perturbation. For different perturbation, the DNN can keep a relatively high accuracy.

\begin{table}[tb]
  \caption{Accuracy in Layer-Wise Perturbation (\%)} \label{table:layer}
  \begin{tabular}{ccccc}
  \toprule
    & Input & Layer 1 & Layer 2 & Layer 3\\
    \midrule
    MNIST & 97.67 & 97.24 & 97.57 & 98.02\\
    SVHN& 88.56 & 88.66 & 87.82 & 88.12\\
  \bottomrule
  \end{tabular}
\end{table}
To demonstrate the effectiveness of private perturbation intuitively, we visualize noise and reconstruction in Fig.~\ref{fig:reconstruct}. The convolutional denoising autoencoder \cite{Masci2011}, which has been successfully applied to image denoising and super-resolution reconstruction \cite{Radford2015, Dong2016}, is used to reconstruct the original data from the perturbed ones. We train the model based on two perturbation strengths. When the perturbation strength is relatively weak, the perturbed pictures can to some extent be reconstructed, although the reconstructed pictures are quite different from the original ones. When the perturbation strength reaches (5,10\%), the strength used in our experimental settings, the perturbed pictures can hardly be reconstructed, which indicates that the adversarial attacker cannot restore the original data based on the revealed perturbed ones even the perturbation strength is public.

\subsection{Performance Comparison}

To verify the performance improvement brought by \system, we compare it with three variants on three image datasets and two real-scenario datasets. BASE denotes the cloud-side DNN, \textit{i.e.}, Conv-Middle for MNIST and SVHN, Conv-Large for CIFAR-10, a sigle-view DNN for DeepMood. The training data and the testing data are fed into the network directly without local transformation, which can be regarded as revealing raw data to the cloud. \system-L1 means that only the loss $\mathcal{L}_1$ is considered when training the cloud-side DNN. It is tested in two situations: test without private perturbation and test with private perturbation. \system\ is the complete framework proposed in this paper. $\lambda$ is set as 0.2 for MNIST, SVHN, and DeepMood, 0.5 for CIFAR-10.

%=====================================================
\begin{figure}[tb]
\centering
\subfloat[(1, 1\%)]{\includegraphics[width=1.58in]{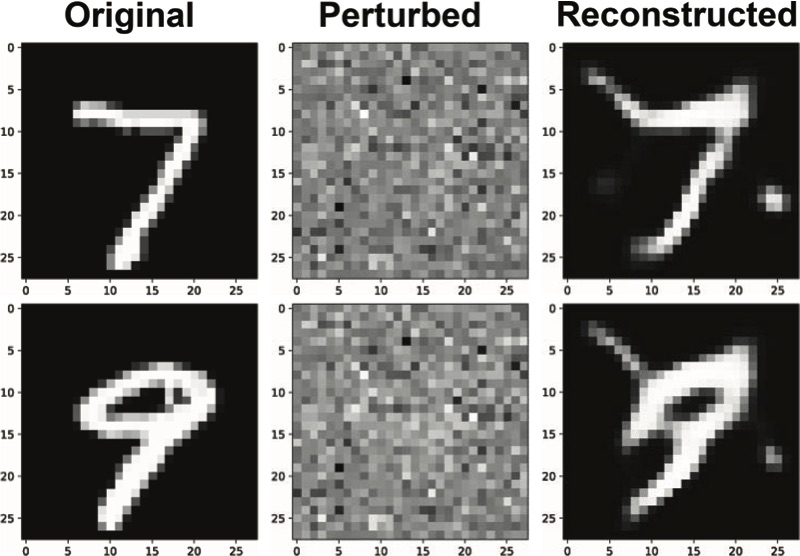}}\ \
\subfloat[(5,10\%)]{\includegraphics[width=1.58in]{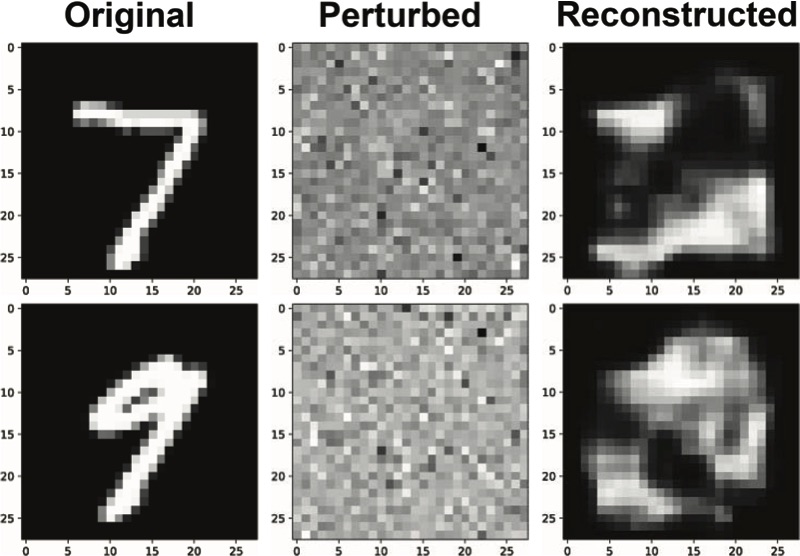}}
\caption{Visualization of noise and reconstruction.} \label{fig:reconstruct}
\end{figure}
%======================================================

\begin{table}[tb]
  \caption{Accuracy of Different Frameworks (\%)} \label{table:performance}
  \resizebox{3.350in}{!}{%
  \begin{tabular}{ccccccc}
  \toprule
     & Perturb & MNIST & SVHN & CIFAR-10&ALPH.&ACCEL.\\
    \midrule
    BASE & NO & 98.21 & 93.24 & 87.42 & 85.45 & 82.36\\
    \system-L1 & NO & 97.44 & 91.18 & 83.18 & 84.09 & 81.44\\
    \system-L1 & YES & 50.17 & 40.93 & 32.73 & 60.20 & 57.19\\
    \system & YES & 98.16 & 90.02 & 79.52& 83.55 & 80.05\\
    \bottomrule
  \end{tabular}
  }
\end{table}

Table~\ref{table:performance} lists the result of every framework. BASE achieves the highest accuracy with the sacrifice of privacy. Testing \system-L1 without perturbed data shows slightly lower accuracy than BASE. This result indicates that some useful information is lost during the local transformation. The accuracy of testing \system-L1 with perturbed data drops severely (\textit{e.g.}, 23.89\% on ALPH. and 24.25\% on ACCEL.) compared with testing \system-L1 without perturbed data. This result indicates that the model trained by the traditional training method which merely relies on clean data is not applicable to the prediction with perturbation, and it is hard to maintain accuracy while protecting user's privacy. \system\ which jointly minimizes the loss on clean data, noisy data, and perturbed noisy data, can considerably mitigate the negative impact brought by the private perturbation. Specially, the performance improvement demonstrated in the real mobile applications indicates that mobile device users can benefit from the cloud resources without risking their privacy in reality.

Notice that we do not make comparison with other advanced DNNs like DenseNet \cite{Huang2016}, because we intend to examine the performance improvement brought by the noisy training. The cloud-side DNN can be replaced by any other advanced DNNs to achieve higher accuracy. Specially, thanks to the transparency property of \system, the model can switch seamlessly from one to another.

\subsection{Analysis of Privacy Budget}
Privacy budget represents the privacy loss in the framework. To analyze the impact of privacy budgets on performance, we use the real mobile application DeepMood to test \system's performance when the privacy budget changes. In this work, the privacy budget is controlled by two parameters, i.e, $b$ and $\mu$. Here, we fix the value of $\mu$ as 10\%, and change the privacy budget $\epsilon$ by adjusting $b$ from 0.3 to 5.

%=====================================================
\begin{figure}[tb]
\centering
\subfloat[ALPH.]{\includegraphics[width=1.55in]{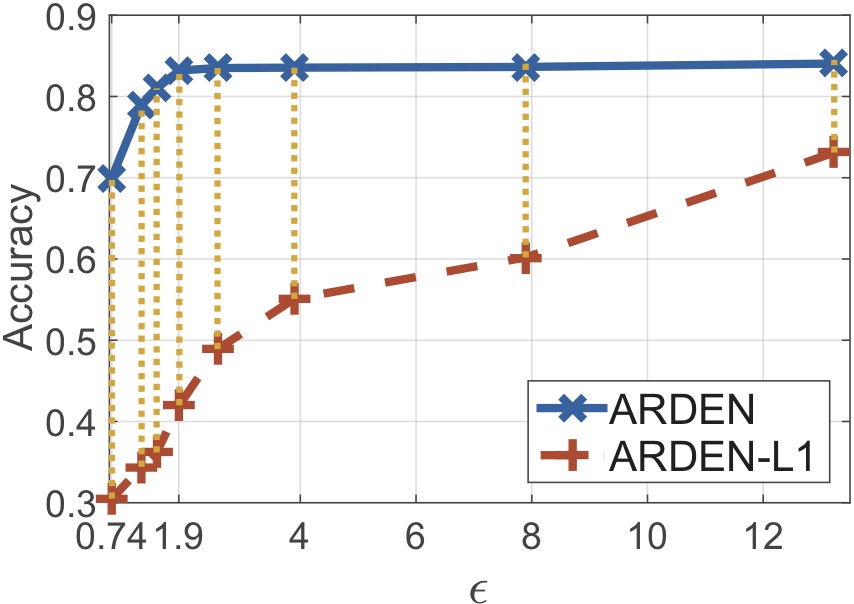}}\ \
\subfloat[ACCEL.]{\includegraphics[width=1.55in]{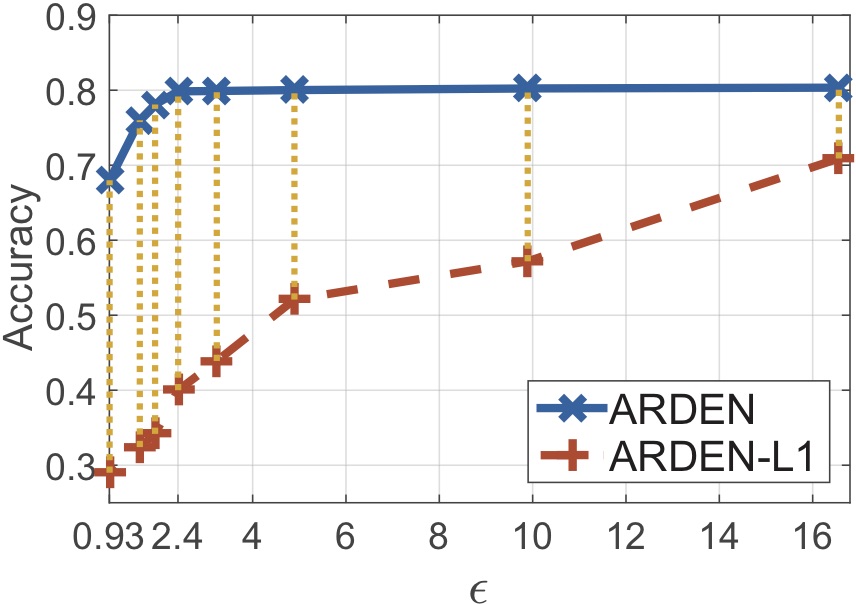}}
\caption{Accuracy vs. privacy budget $\epsilon$.} \label{fig:budget}
\end{figure}
%======================================================

%=====================================================
\begin{figure}[tb]
\centering
\includegraphics[width=2.6in]{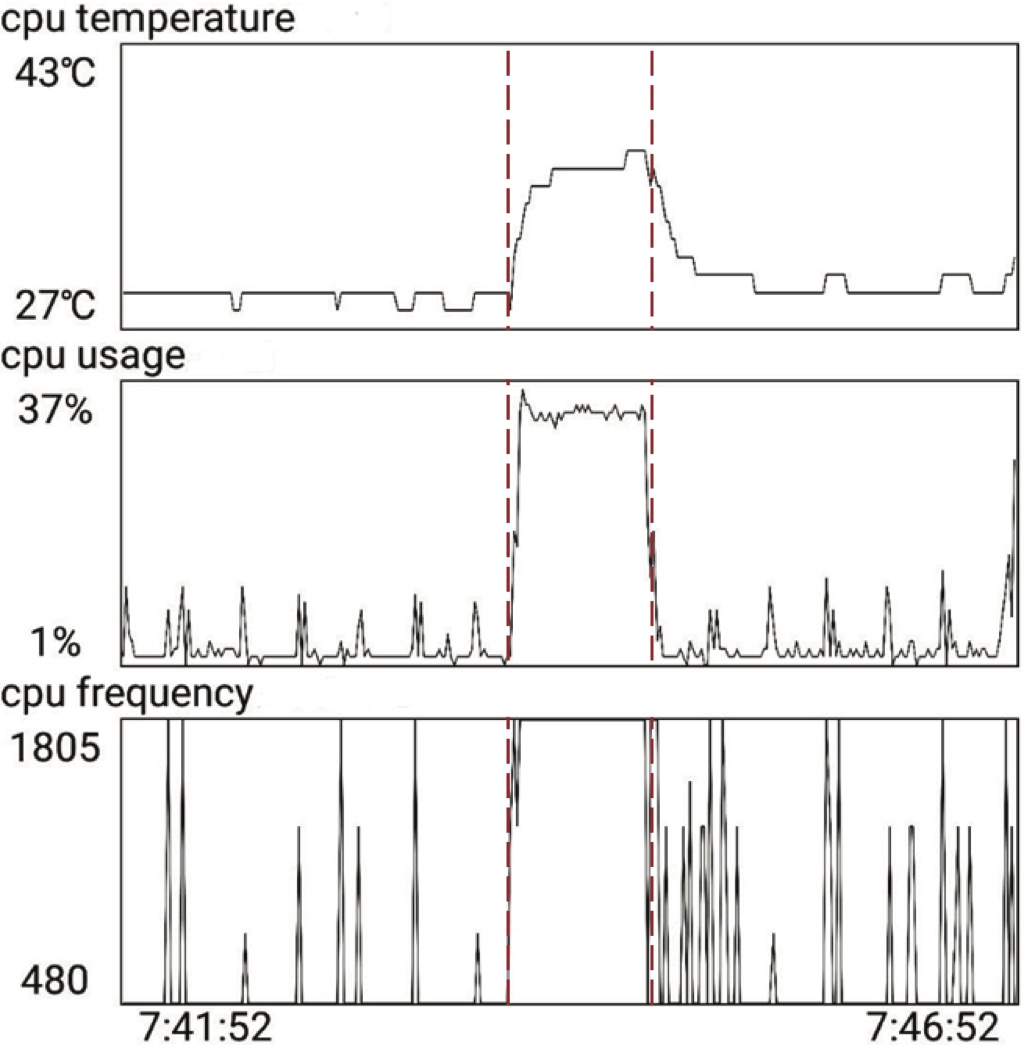}
\caption{CPU status during local transformation.} \label{fig:cpu}
\end{figure}
%======================================================

Fig. \ref{fig:budget} shows that \system\ can maintain the accuracy at a high value for a wide range of privacy budgets. When $\epsilon<1$, the accuracy decreases by 14\% and 12\% for ALPH. and ACCEL., respectively. The accuracy almost keeps unchanged until $\epsilon$ decreases to 1.9 for ALPH. and 2.4 for ACCEL., respectively, which is a quite tight privacy requirement \cite{Papernot2017}. These results argue that \system\ is applicable to different privacy requirements. It can effectively improve the performance even when the privacy budget is relatively tight. In addition, we plot the accuracy of \system-L1 when it is tested with private perturbation. The gap between the two lines can be viewed as the accuracy improvement brought by the noisy training. We find that the gap is large when $\epsilon$ is small and narrows with the increase of $\epsilon$ in both datasets.

\subsection{Implementation on Android}
We implement the \system\ in a demo system composed by HUAWEI HONOR 8 and DELL INSPIRON 15. The mobile device is equipped with ARM Cortex-A53@2.3GHz and ARM Cortex-A53@1.81GHz. The laptop is equipped with Core i7-7700HQ@2.80GHz and NVIDIA GTX 1050Ti. The mobile device is connected to the laptop through the IEEE 802.11 wireless network. We use TensorFlow to generate the deployable model for the Android system.

Firstly, we investigate the CPU status of the mobile device during the local transformation. 3,000 SVHN pictures are processed consecutively. Fig.~\ref{fig:cpu} shows that the CPU load boosts during the local data transformation. Especially, the CPU temperature increases by 9 degrees, which is an indicator of high energy cost. This observation argues that executing a small neural network for a relatively long time can place a heavy burden on mobile devices. It is not suitable for most mobile devices to deploy DNNs directly.

\begin {table}[tb]
 \caption{Execution Overhead Comparison}
 \label{table:overhead}
%\resizebox{3.10in}{!}{%
\begin{tabular}{cccc}
  \toprule
    Network & Time(ms) & Memory(MB) & Energy(J)\\
    \midrule
    Local & 3267 & 0.56 & 1.29\\
    Conv-Middle & 7386 & 5.86 & 5.07\\
    Conv-Large & 12195 & 12.89 & 8.37\\
    MobileNet & 4106 & 3.34 & 2.82\\
    GoogLeNet & 36251 & 53.12 & 24.87\\
  \bottomrule
\multicolumn{4}{c}{%
  \begin{minipage}{6.5cm}%
 Note: The time and energy cost contain the part spent on transmitting data and waiting for the response from the laptop.
  \end{minipage}%
}\\
\end{tabular}
%}
\end{table}

We further compare the overhead of the local neural network with those of other four DNNs. MobileNet \cite{Andrew2017} is a lightweight model for mobile vision applications. GoogLeNet \cite{Szegedy2015} is a deep convolutional neural network that achieved the state of the art in ILSVRC'14. For the local neural network, the mobile device only undertakes the local transformation, and then transmits the data to the laptop, as \system\ does. The other four DNNs entirely rely on the mobile device. The networks process 100 SVHN pictures consecutively. Table~\ref{table:overhead} lists the response time and the resource consumption of different networks. We estimate the energy consumption based on the statistics from \cite{CPU2015, Balasubramanian2009}. The average reductions of \system\ compared with the other four DNNs in terms of time, memory, and energy are 60.10\%, 92.07\%, and 77.05\%, respectively.

\section{Related Work}
\textbf{Deep learning on mobile devices}. Lane et al. \cite{Lane2015} studied typical mobile sensing tasks using DNNs. The preliminary results highlighted the critical need for further research towards making use of advances in deep learning to the field of mobile sensing. However, suffered from the contradiction between the large size of DNNs and the limited capacity of mobile devices, it is challenging to deploy deep learning applications in mobile devices efficiently. Han et al. \cite{Han2015} tried to compress the DNN through a three-stage method: pruning, trained quantization and Huffman coding, which showed a considerable reduction in terms of the storage requirements of DNNs. Offloading the heavy computational tasks to the cloud data center \cite{Golkarifard2017, Liu2017} is another solution to enable deep learning applications on mobile devices. A distributed DNN architecture across the cloud, the edge, and the mobile devices was designed in \cite{Teerapittayanon2017} which allowed the combination of fast inference on mobile devices and complex inference in cloud data centers.

\textbf{Privacy issue in deep learning}. Deep learning naturally requires users' data to train neural networks and infer results, which raises the privacy issue for sensitive data. To preserve privacy when offloading operations to the cloud, Zhang et al. \cite{Zhang2016} encrypted the sensitive data by using the BGV encryption scheme. More recently, Osia et al. \cite{Osia2017} designed a hybrid deep learning architecture for private inference across mobile devices and clouds. The Siamese network was used to protect against undesired inference attacks, which in essence provided $k$-anonymity protection. Li et al. \cite{Li2017} proposed a flexible framework for private deep learning. Before uploading data to clouds, it was transformed by the local neural network whose structure, including the number of layers, the depth of output channels, and the subset of selected channels, was variable. Differential privacy mechanism which provides provable privacy guarantee has been used in deep learning. Shokri et al. \cite{Shokri2015} presented a privacy-preserving distributed SGD to enable multiple data owners to collaborate on training a DNN. The sparse vector technique was introduced in this work to provide differential privacy. Abadi et al. \cite{Abadi2016} designed a new differential privacy mechanism in SGD to reduce the privacy budget. However, these works applied differential privacy to the training phase rather than the inference phase. To the best of our knowledge, this is the first work using differential privacy in the inference phase to provide a provable privacy guarantee.

\section{Conclusions}
In order to enable highly efficient deep learning service on mobile devices, we propose \system\ which partitions the DNN across mobile devices and clouds in this paper. All the heavy works are offloaded to the cloud, while the mobile device merely undertakes the simple data transformation and perturbation. To limit the privacy risk when uploading data to clouds, we introduce the differential privacy mechanism, and design a new differentially private perturbation that shows more flexibility and fits well with the stacking structure of neural networks. A rigorous analysis is given to determine the privacy budget of the perturbation. Apart from privacy, the inference performance is taken into consideration in this paper. A novel training method that injects deliberate noisy samples into the training data is proposed. A series of experiments based on three image datasets and a real mobile application demonstrate that \system\ can not only preserve users' privacy but also improve the inference performance. Finally, we implement \system\ on a demo mobile cloud system to verify its practicality and test its overhead. \system\ effectively reduces the resource consumption by over 60\%.

\appendix
%Appendix A
\section{Proof of Theorem~\ref{thm:privacy}}
Before proving Theorem~\ref{thm:privacy}, we first prove the following two theorems.
\begin{theorem}
Given a input $x$ and a deterministic function $f$, $|f(x)|\leq B$, for $\forall a \in \mathbb{R}^+$, the random mechanism $\mathcal{A}(x)=f(x)+aLap(B/\sigma)$ is $(\frac{{2\sigma }}{a})$-differentially private.
\end{theorem}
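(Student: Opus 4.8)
The plan is to reduce this to the classical Laplace-mechanism argument. First I would bound the (pseudo-)global sensitivity of $f$: since $|f(x)| \le B$ for every admissible input, for any two inputs $x,x'$ the triangle inequality gives $|f(x)-f(x')| \le |f(x)| + |f(x')| \le 2B$. Observe that this holds for \emph{all} pairs, not just adjacent ones, which is precisely why $B$ (and hence this bound) is input-independent and leaks nothing about $x$; it stands in for a true global sensitivity $\Delta f$, which, as the main text notes, is hard to compute for a neural network.

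Next I would pin down the noise distribution. Multiplying a $Lap(B/\sigma)$ variable by $a>0$ produces a Laplace variable with scale parameter $aB/\sigma$ and density $p(t) = \frac{\sigma}{2aB}\exp\!\left(-\frac{\sigma|t|}{aB}\right)$, so the density of $\mathcal{A}(x)$ evaluated at an output value $S$ is $p(S - f(x))$. Forming the likelihood ratio,
\[
\frac{\Pr[\mathcal{A}(x)=S]}{\Pr[\mathcal{A}(x')=S]}
= \exp\!\left(\frac{\sigma}{aB}\left(|S-f(x')| - |S-f(x)|\right)\right)
\le \exp\!\left(\frac{\sigma}{aB}\,|f(x)-f(x')|\right)
\le \exp\!\left(\frac{2\sigma}{a}\right),
\]
where the first inequality is the triangle inequality $|S-f(x')|-|S-f(x)| \le |f(x)-f(x')|$ and the second uses the sensitivity bound from the previous step. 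By the definition of $\varepsilon$-differential privacy this gives the claim with $\varepsilon = 2\sigma/a$; for a general output set $S$ rather than a single point one simply integrates the densities over $S$, and the pointwise ratio bound carries through the integral unchanged.

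I do not expect a genuine obstacle here: this is the standard Laplace computation, and the only points requiring a little care are (i) tracking how the scale parameter of $Lap(\cdot)$ transforms when we rescale by $a$, and (ii) being content with the crude but input-independent bound $2B$ in place of a true global sensitivity. Stating the result with the free parameter $a$ and the clipping bound $B$ left explicit is exactly what will later let us compose it layer-wise through $\overline{\mathcal{M}}_l$ and combine it with the nullification step to obtain Theorem~\ref{thm:privacy}.
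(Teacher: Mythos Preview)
Your proposal is correct and follows essentially the same route as the paper: you form the likelihood ratio of the Laplace densities, apply the triangle inequality to get $|S-f(x')|-|S-f(x)|\le|f(x)-f(x')|$, and then use the clipping bound $|f(x)-f(x')|\le 2B$ to obtain $\varepsilon=2\sigma/a$. The only difference is cosmetic---you spell out the rescaled Laplace density and the passage from points to sets, which the paper leaves implicit.
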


\begin{proof}
For any adjacent inputs $x$ and $x'$,
\begin{equation*}
\begin{split}
& \frac{{\Pr [f(x) + aLap(B/\sigma ) = S]}}{{\Pr [f(x') + aLap(B/\sigma ) = S]}} = \frac{{{e^{ - \frac{{|S - f(x)|\sigma }}{{aB}}}}}}{{{e^{ - \frac{{|S - f(x')|\sigma }}{{aB}}}}}}\\
& = {e^{\frac{\sigma }{{aB}}(|S - f(x')| - |S - f(x)|)}} \\
& \le {e^{\frac{\sigma }{{aB}}|f(x) - f(x')|}} \le {e^{\frac{{2\sigma }}{a}}}
\end{split}
\end{equation*}

Based on the Definition 2.1, we have $\varepsilon=\frac{{2\sigma}}{a}$.
\end{proof}

\begin{theorem}
Given a input $x$, suppose $\mathcal{A}(x)$ is $\varepsilon$-differentially private, if we perform item-wise nullification of $x$ with $I_n$ whose nullification rate is $\mu$, $x'=x\odot I_n$, then $\mathcal{A}(x')$ is $\varepsilon '$-differentially private,
\begin{equation*}
\varepsilon ' = \ln[(1-\mu)e^\varepsilon +\mu].
\end{equation*}
\end{theorem}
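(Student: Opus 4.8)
The plan is to reduce the claim to an elementary scalar inequality by conditioning on the randomness of the nullification mask. Fix adjacent inputs $x,x'$ differing only in item $k$, and an arbitrary output event $S$. I would couple the two runs so that both use the same random mask, and --- for the analysis --- model the mask as an \emph{independent} coordinate dropout that zeros each of the $N$ items with probability $\mu$. (Algorithm~1 instead uses a mask with a fixed number $\lceil N\mu\rceil$ of zeros, but it still nullifies any given item with probability $\lceil N\mu\rceil/N\ge\mu$, and since $t\mapsto(1-t)e^\varepsilon+t$ is decreasing, replacing that rate by $\mu$ only relaxes the asserted bound; so the independent-dropout model with rate $\mu$ is the conservative one.) Conditioning on the restriction $R$ of the mask to the coordinates other than $k$, item $k$ is then zeroed with probability $\mu$ regardless of $R$, and I split the analysis on that event.

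Fix $R=r$. When item $k$ is nullified, $x\odot I_n$ and $x'\odot I_n$ coincide --- call the common value $u_r$; when it survives, they equal vectors $v_r,v'_r$ that differ only in coordinate $k$, hence are adjacent, and moreover $u_r$ is adjacent to each of $v_r$ and $v'_r$ (a single-item change). Abbreviating $a=\Pr[\mathcal{A}(u_r)\in S]$, $c=\Pr[\mathcal{A}(v_r)\in S]$, $c'=\Pr[\mathcal{A}(v'_r)\in S]$, the $\varepsilon$-differential privacy of $\mathcal{A}$ gives $c\le e^\varepsilon a$ and $c\le e^\varepsilon c'$, while the law of total probability gives
\begin{equation*}
\Pr[\mathcal{A}(x\odot I_n)\in S\mid R=r]=\mu a+(1-\mu)c,\qquad\Pr[\mathcal{A}(x'\odot I_n)\in S\mid R=r]=\mu a+(1-\mu)c'.
\end{equation*}

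The remaining work is the scalar fact: for $a,c,c'\ge0$ with $c\le e^\varepsilon\min\{a,c'\}$ and $K:=(1-\mu)e^\varepsilon+\mu$, one has $\mu a+(1-\mu)c\le K(\mu a+(1-\mu)c')$. Since $K-1=(1-\mu)(e^\varepsilon-1)$, the difference of the two sides equals $(1-\mu)\bigl[\mu(e^\varepsilon-1)a+Kc'-c\bigr]$, so it suffices to prove $c\le\mu(e^\varepsilon-1)a+Kc'$. I would do this by cases on the sign of $a-c'$: if $a\le c'$, then $\mu(e^\varepsilon-1)a+Kc'\ge\mu(e^\varepsilon-1)a+Ka=\bigl(\mu(e^\varepsilon-1)+K\bigr)a=e^\varepsilon a\ge c$; if $a\ge c'$, then $\mu(e^\varepsilon-1)a+Kc'\ge\mu(e^\varepsilon-1)c'+Kc'=e^\varepsilon c'\ge c$. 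This is the only computation in the proof and it is routine.

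Averaging the conditional inequality over $r$ with weights $\Pr[R=r]$ gives $\Pr[\mathcal{A}(x\odot I_n)\in S]\le K\,\Pr[\mathcal{A}(x'\odot I_n)\in S]$ for every adjacent pair and every $S$, whence by Definition~2.1 the mechanism $x\mapsto\mathcal{A}(x\odot I_n)$ is $\varepsilon'$-differentially private with $\varepsilon'=\ln K=\ln[(1-\mu)e^\varepsilon+\mu]$. The genuinely delicate point --- as opposed to the algebra above --- is the handling of the mask: the per-$r$ decomposition needs item $k$ to be zeroed with probability $\mu$ \emph{conditionally} on the rest of the mask, and it needs the nullified input $u_r$ to be adjacent to both surviving inputs; both hold cleanly under independent dropout but not verbatim for the fixed-cardinality mask, where conditioning on $R$ fixes the $k$-th bit. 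I would resolve this by carrying the argument out in the independent-dropout model and then invoking the monotonicity remark of the first paragraph to transfer the bound to the construction actually used.
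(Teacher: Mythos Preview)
Your argument follows the same strategy as the paper's: condition on whether the differing item is nullified, use that the masked inputs coincide when it is and remain adjacent when it is not, and combine with weights $\mu,1-\mu$. The differences are in the bookkeeping, and on both points you are more careful than the paper.

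The paper works in the add/remove adjacency model (it writes $x_1=x_2\cup\{i\}$), so that when item $i$ is nullified $x_1\odot I_n=x_2\odot I_n$, and---crucially---$x_2\odot I_n$ does not depend on the $i$-th entry of $I_n$ at all, since item $i$ is absent from $x_2$. In your notation this forces $a=c'$, and the scalar inequality $\mu a+(1-\mu)c\le K(\mu a+(1-\mu)c')$ collapses to $\mu a+(1-\mu)c\le Ka$, which follows directly from $c\le e^\varepsilon a$. The paper's displayed chain is exactly this one-line bound, though written with unconditional probabilities where conditionals are intended. You instead treat the substitute model (both inputs carry a value at coordinate $k$), so $a\neq c'$ in general; that genuinely needs both constraints $c\le e^\varepsilon a$ and $c\le e^\varepsilon c'$ and forces the case split you perform. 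This is extra generality the paper does not claim, and your scalar computation is correct.

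Your handling of the mask is also more explicit. The paper simply asserts $\Pr[i\notin x_1']=\mu$ and never revisits whether conditioning on the rest of the mask disturbs this; your reduction to independent coordinate dropout, together with the monotonicity of $t\mapsto(1-t)e^\varepsilon+t$, gives a clean justification that also covers the fixed-cardinality mask of Algorithm~1. Both refinements are sound; the core decomposition is the same as the paper's.
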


\begin{proof}
Suppose there are two adjacent inputs $x_1$ and $x_2$ that differ by one single item $i$, say $x_1=x_2\cup {i}$. For arbitrary binary matrix $I_n$, after the nullification $x'_1=x_1\odot I_n$, $x'_2=x_2\odot I_n$, there are two possible cases, \textit{i.e.}, $i \notin x'_1$ and $i \in x'_1$.

Case 1: $i \notin x'_1$. Since $x_1$ and $x_2$ differ only by the item $i$, $x_1\odot I_n$ = $x_2\odot I_n$. Then, we have,
\begin{equation*}
\Pr[\mathcal{A}(x_1\odot I_n)=S]=\Pr[\mathcal{A}(x_2\odot I_n)=S].
\end{equation*}

Case 2: $i \in x'_1$. Since $x_1$ and $x_2$ differ only by the item $i$, $x'_1$ and $x'_2$ remain adjacent inputs that differ by the item $i$. Because $\mathcal{A}(x)$ is $\varepsilon$-differentially private, we have,
\begin{equation*}
\Pr[\mathcal{A}(x_1\odot I_n)=S]\leq e^{\varepsilon}\Pr[\mathcal{A}(x_2\odot I_n)=S].
\end{equation*}

Combine the two cases together, and use the fact that $\Pr[i \notin x'_1]=\mu$:
\begin{equation*}
\begin{split}
& \Pr [\mathcal{A}(x_1\odot I_n)=S]\\
& = \mu \Pr[\mathcal{A}(x_1\odot I_n)=S] + (1-\mu) \Pr[\mathcal{A}(x_1\odot I_n)=S] \\
& \leq \mu \Pr[\mathcal{A}(x_2\odot I_n)=S] + (1-\mu) e^{\varepsilon}\Pr[\mathcal{A}(x_2\odot I_n)=S] \\
& = ((1-\mu) e^{\varepsilon}+\mu)\Pr[\mathcal{A}(x_2\odot I_n)=S] \\
& = e^{\ln[(1-\mu)e^\varepsilon +\mu]} \Pr[\mathcal{A}(x_2\odot I_n)=S]
\end{split}
\end{equation*}

Based on the Definition 2.1, we have $\varepsilon '=\ln[(1-\mu)e^\varepsilon +\mu]$.
\end{proof}

Then, we give the proof of Theorem~\ref{thm:privacy}.

\begin{proof}
To begin with, we analyze the local transformation without the nullification operation. Denote the output-bound neural network $\mathcal{M}_l$ as $\mathcal{M}'_l$, the local transformation $\mathcal{A}$ defined by Algorithm 1 can be written as:
\begin{equation}\label{eqn:A}
\begin{split}
\mathcal{A}(\boldsymbol{x}_s) & =\overline{\mathcal{M}}_l(\mathcal{M}'_l(\boldsymbol{x}_s)+Lap(B/\sigma \mathbf{I}))\\
& = \overline{\mathcal{M}}_l(\boldsymbol{x}'_l+Lap(B/\sigma \mathbf{I})).
\end{split}
\end{equation}

Since $Lap(B/\sigma \mathbf{I})$ is much smaller than $\boldsymbol{x}'_l$ (otherwise, the utility of the representation is destroyed), we can approximate (\ref{eqn:A}) by using the first-order Taylor series:
\begin{equation}\label{eqn:A-T}
\begin{split}
& \overline{\mathcal{M}}_l(\boldsymbol{x}'_l+Lap(B/\sigma \mathbf{I})) \\
& \approx \overline{\mathcal{M}}_l(\boldsymbol{x}'_l) + ({\nabla _{{{\boldsymbol{x}}'_l}}}{\overline {\mathcal M} _l})^{\mathrm{T}}Lap(B/\sigma \mathbf{I})\\
& = \mathcal{M}(\boldsymbol{x}_s) + ({\nabla _{{{\boldsymbol{x}}'_l}}}{\overline {\mathcal M} _l})^{\mathrm{T}}Lap(B/\sigma \mathbf{I}).
\end{split}
\end{equation}

According to the Theorem A.1, (\ref{eqn:A-T}) is $(2\sigma / \Lambda)$-differentially private, where $\Lambda = \|{\nabla _{{{\boldsymbol{x}}'_l}}}{\overline {\mathcal M} _l}\|_{\infty}$.

Then, considering the nullification operation, the overall local transformation is $\mathcal{A}(\boldsymbol{x}_s\odot I_n)$. Based on the Theorem A.2, we have,
\begin{equation*}
\varepsilon  = \ln [(1-\mu) {e^{2\sigma /\Lambda}} + \mu].
\end{equation*}

\end{proof}

\bibliographystyle{ACM-Reference-Format}
\bibliography{sample-bibliography}

\end{document}